%%%%%%%%%%%%%%%%%%%%%%%%%%%%%%%%%%%%%%%%%%%%%%%%%%%%%%%%%%%%%%%%%%%%%%%%%%%%%%%%
%2345678901234567890123456789012345678901234567890123456789012345678901234567890
%        1         2         3         4         5         6         7         8

\documentclass[letterpaper, 10 pt, conference]{ieeeconf}  % Comment this line out if you need a4paper

\IEEEoverridecommandlockouts                              % This command is only needed if 
                                                          % you want to use the \thanks command

\overrideIEEEmargins                                      % Needed to
% meet printer requirements.
\pdfminorversion=4

%In case you encounter the following error:
%Error 1010 The PDF file may be corrupt (unable to open PDF file) OR
%Error 1000 An error occurred while parsing a contents stream. Unable to analyze the PDF file.
%This is a known problem with pdfLaTeX conversion filter. The file cannot be opened with acrobat reader
%Please use one of the alternatives below to circumvent this error by uncommenting one or the other
%\pdfobjcompresslevel=0
%\pdfminorversion=4

% See the \addtolength command later in the file to balance the column lengths
% on the last page of the document

% The following packages can be found on http:\\www.ctan.org
\usepackage{graphicx} % for pdf, bitmapped graphics files
\usepackage{amsmath} % assumes amsmath package installed
\usepackage{amssymb}  % assumes amsmath package installed
\usepackage{algorithm}
\usepackage{algorithmicx}
\usepackage[noend]{algpseudocode}
\usepackage{subcaption}
\usepackage{url}
\newtheorem{theorem}{Theorem}
\algnewcommand{\LeftComment}[1]{\Statex \(\triangleright\) #1}

\newcommand{\statespace}{\mathbb{S}}
\newcommand{\start}{s_\mathsf{start}}
\newcommand{\goal}{s_\mathsf{goal}}
\newcommand{\goalset}{S_\mathsf{goal}}
\newcommand{\successors}{\mathcal{S}}
\newcommand\dup{\mathsf{dup}}
\newcommand\dupe{\mathsf{dup}_\mathcal{E}}
\newcommand\dups{\mathsf{dup}_\mathcal{S}}
\newcommand\epsilonmax{\epsilon_\mathsf{max}}
\newcommand\open{\mathsf{OPEN}}
\newcommand\closed{\mathsf{CLOSED}}
\newcommand\dist{\mathsf{dist}}
\newcommand\parent{s_\mathsf{parent}}
\newcommand\dNN{d_\mathsf{NN}}
\newcommand\hashtable{\textsc{HashSubtree}}
\newcommand\labeler{\textsc{Subtree}}
\newcommand\weiapproach{\textsc{Penalty}}
\newcommand\wastar{\textsc{WA*}}
\newcommand\rrt{\textsc{RRT}}
\newcommand\rrtstar{\textsc{RRT*}}

\title{\LARGE \bf
Improved Soft Duplicate Detection in Search-Based Motion Planning
}

%\author{Albert Author$^{1}$ and Bernard D. Researcher$^{2}$% <-this % stops a space
%\thanks{*This work was not supported by any organization}% <-this % stops a space
% %\thanks{$^{1}$Albert Author is with Faculty of Electrical Engineering, Mathematics and Computer Science,
%         University of Twente, 7500 AE Enschede, The Netherlands
%         {\tt\small albert.author@papercept.net}}%
% \thanks{$^{2}$Bernard D. Researcheris with the Department of Electrical Engineering, Wright State University,
%         Dayton, OH 45435, USA
%         {\tt\small b.d.researcher@ieee.org}}%
% }
\author{Nader Maray$^{1}$, Anirudh Vemula$^{2}$, and Maxim Likhachev$^{2}$%
\thanks{$^{1}$ Department of Computer Science, Texas State University {\tt \small
    fmn9@txstate.edu}}%
\thanks{$^{2}$ Robotics Institute, Carnegie Mellon University {\tt \small
    vemula@cmu.edu maxim@cs.cmu.edu}}%
}

\begin{document}

\maketitle
\thispagestyle{empty}
\pagestyle{empty}

%%%%%%%%%%%%%%%%%%%%%%%%%%%%%%%%%%%%%%%%%%%%%%%%%%%%%%%%%%%%%%%%%%%%%%%%%%%%%%%%
\begin{abstract}

Search-based techniques have shown great success in motion planning
problems such as robotic navigation by discretizing 
the state space and precomputing motion primitives. However in
domains with complex dynamic constraints, constructing motion
primitives in a discretized state space is non-trivial. This requires
operating in continuous space which can be challenging for
search-based planners as they can get stuck in local minima regions.
Previous work~\cite{DBLP:conf/iros/DuKSL19} on planning in continuous
spaces introduced \textit{soft 
  duplicate detection} which requires search to compute the duplicity
of a state with respect to previously seen states to avoid exploring
states that are likely to be duplicates, especially in local minima regions.
They propose a simple metric utilizing
the euclidean distance between states, and
proximity to obstacles to compute the duplicity. In this paper, we
improve upon this metric by introducing a kinodynamically
informed metric, \textit{subtree overlap}, between two states as the similarity
between their successors that can be reached within a fixed time
horizon using kinodynamic motion primitives. This
captures the intuition that, due to robot dynamics, duplicate states can be
far in euclidean distance and result in very similar successor states,
while non-duplicate states can be close and
result in widely different successors.
Our approach computes the new metric offline for a given robot dynamics, and stores
the subtree overlap value for all possible
relative state configurations.
During search, the planner uses these precomputed values to speed up duplicity
computation, and achieves fast planning times in continuous spaces in addition
to completeness and sub-optimality guarantees.
Empirically, we show that our improved metric for soft
duplicity detection in search-based planning outperforms previous
approaches in terms of planning time, by a factor of
$1.5$ to $2$$\times$ on $3$D and $5$D planning
domains with highly constrained dynamics.
% @avemula: Add speedups obtained and concise details on experimental
% domains
% @avemula: Still lacking the definition of what duplicity is, but do
% we need it?

\end{abstract}

\section{Introduction and Related Work}
Planning motion for robots such as
manipulators~\cite{DBLP:conf/icra/BerensonSFK09}, unmanned aerial
vehicles~\cite{DBLP:journals/ras/AllenP19} and
humanoids~\cite{DBLP:conf/wafr/HauserBHL06} with complex kinodynamic 
constraints is challenging as it 
requires us to compute trajectories that are both collision-free and feasible to
execute on the robot. 
The traditional approach to kinodynamic planning using search-based planners is
to discretize the continuous state space into cells, and the search traverses
only through the centers of the cells~\cite{DBLP:conf/iros/LiuAMK17}. To account
for the kinodynamic nature of the planning problem, these
approaches~\cite{DBLP:conf/iros/PivtoraikoK11,
  DBLP:conf/icra/CohenCL10} make use of motion
primitives which are precomputed
actions that the robot can take at any state. Due to the discretization,
we require that the motion primitives be able to connect from one cell
center to another cell center. However, this requires solving a two-point boundary value
problem~\cite{DBLP:journals/jfr/PivtoraikoKK09} which may be infeasible to solve
for domains with highly constrained
dynamics~\cite{DBLP:books/cu/L2006}. A typical solution is to 
discretize the state space at a higher resolution which can blow up the size of
the search space making search computationally very expensive~\cite{DBLP:journals/iandc/Bellman58}.

Alternatively, we can use sampling-based approaches such as
\rrt{}~\cite{DBLP:journals/ijrr/LaValleK01} and
\rrtstar{}~\cite{DBLP:journals/ijrr/KaramanF11} 
that have been popular in kinodynamic planning.
These approaches
directly plan in the continuous space by randomly sampling controls or
motion primitives, 
and extending states~\cite{DBLP:journals/ijrr/LaValleK01,
  DBLP:journals/arobots/SakcakBFP19} until we reach the goal
state.
% The major disadvantage 
% of these approaches is that they do not provide any
% guarantees on sub-optimality 
% of the solution and lack deterministic behavior~\cite{DBLP:books/cu/L2006}, unlike search-based
% methods.
The major disadvantage of these approaches is that typically the
solutions they generate can be quite poor in quality and lack
consistency in solution (similar solutions for similar planning
queries) due to randomness, unlike search-based methods.
Furthermore, in motion planning domains 
with narrow passageways, sampling-based
approaches can take a long time to find a solution as the probability
of randomly sampling a state within the narrow passageway is very
low~\cite{DBLP:conf/ro-man/MainpriceRTS20}. 

Naively, one could use search-based planning directly in the
continuous space. The challenge here is to ensure that the search does not
unnecessarily expand similar states, which happens often when the
heuristic used is not informative and 
does not account for the kinodynamic constraints explicitly. In such cases, the
search can end up in a ``local minima'' region, and expand a large number of
nearly identical states before exiting the region~\cite{DBLP:conf/aaai/LikhachevS08}.
Approaches that aim to detect when the heuristic is ``stagnant''~\cite{DBLP:conf/ijcai/IslamSL18} or
avoid local minima regions in the state space~\cite{DBLP:conf/aips/VatsNL17} require either
extensive domain knowledge or user input to achieve this. In order to
avoid expanding similar 
states, approaches such as~\cite{DBLP:journals/algorithmica/BarraquandL93,
  DBLP:conf/socs/GonzalezL11} group states into equivalence classes that can be
used for duplicate detection~\cite{DBLP:conf/ijcai/DowK09} in discrete state
spaces.

A recent work~\cite{DBLP:conf/iros/DuKSL19} extends similar ideas to continuous
state spaces by introducing \textit{soft duplicate detection}.
They define a duplicity function that assigns a value to each state during
search based on how likely that state can contribute to the search finding a
solution. The assigned duplicity is then used to penalize states that are
similar to previously seen states by inflating their heuristic within a weighted
A* framework~\cite{DBLP:journals/ai/Pohl70}. Unlike past duplicate detection
works, this approach does not prune away duplicate states resulting in maintaining
completeness guarantees while achieving fast planning times. Their approach is
shown to outperform other duplicate detection approaches such
as~\cite{DBLP:journals/algorithmica/BarraquandL93,
  DBLP:conf/socs/GonzalezL11}.

Our proposed approach builds upon this work by improving the duplicity function
used, to incorporate a more kinodynamically informed notion of when a state is likely to
contribute to the search finding a solution. We achieve this by using
a novel metric, \textit{subtree overlap}, between two states
which is a similarity metric between their successors that can be
reached using kinodynamic motion primitives within a fixed time
horizon. We show that computing this new metric during search can be
computationally very expensive and hence, we precompute the metric
offline for all possible relative state 
configurations and store these values. 
During search, we use these precomputed subtree overlap values in
duplicity computation to obtain $1.5$ to $2$$\times$ improvement in planning time
over previous approaches, including~\cite{DBLP:conf/iros/DuKSL19}, in
continuous $3$D and $5$D planning domains with highly constrained dynamics. 
In addition to faster planning times, we also retain the completeness and
sub-optimality bound guarantees of search-based methods.

\section{Problem Setup and Background}
In this section, we will describe the problem setup and previous work
on soft duplicate
detection~\cite{DBLP:conf/iros/DuKSL19} as
our approach builds on it. We are given a robot with state space
$\statespace$ whose dynamics are constrained, and the objective is to
plan a kinodynamically feasible and collision-free path from a start
state $\start$ to any state in a goal region specified by
$\goalset \subset \statespace$. This is formulated as a search problem by constructing a
lattice graph using a set of motion primitives, which are
dynamically feasible actions, at any
state~\cite{DBLP:journals/jfr/PivtoraikoKK09}. These motion primitives
define a set of successors for any state $s$ given by the set
$\successors_{1}(s)$ comprising of states that are dynamically feasible to
reach from $s$. We also
have a cost function $c: \statespace \times \statespace \rightarrow
\mathbb{R}^+ \cup \{0\}$ which assigns a cost $c(s, s')$ for any
motion primitive taking the robot from state $s$ to $s'$. Finally, we
assume that we are given access to an admissible heuristic function
$h: \statespace \rightarrow \mathbb{R}^+ \cup \{0\}$ which is an
underestimate of the optimal path cost from any state to a goal state.

The soft duplicate detection framework, introduced
in~\cite{DBLP:conf/iros/DuKSL19}, requires access to a duplicity
function $\dup: \statespace \times \statespace \rightarrow [0, 1]$
which for any two states $s, s'$ corresponds to the likelihood that
they are duplicates of each other. More precisely, $\dup(s, s')$
captures the (inverse) likelihood that $s$ will contribute to
computing a path to $\goalset$ given that the search has already
explored $s'$. This can be generalized (with a slight
abuse of notation) to compute duplicity of a state with respect to a
set of states $U$ as $\dup(s, U) = \max_{s' \in U} \dup(s,
s')$. This framework is used within Weighted
A*~\cite{DBLP:journals/ai/Pohl70} search\footnote{Recall that weighted A*
expands states in order of their $f$-value given by $f(s) = g(s) + \epsilon_0 h(s)$
where $g(s)$ is the cost-to-come for $s$ from $\start$, and
$\epsilon_0$ is the inflation factor} by using a state-dependent
heuristic inflation factor
given by,
$\epsilon(s) = \max(\epsilonmax \cdot \dup(s, U), \epsilon_0)$
where $\epsilon_0$ and $\epsilonmax$ are constants such that
$\epsilonmax > \epsilon_0 > 1$. The set $U$ consists
of the states in the priority queues $\open$ (containing states that
the search may expand,) and $\closed$ (containing states that search
has already expanded,) i.e. $U = \open \cup \closed$. For completeness, we
present Weighted A* with soft duplicate detection using $\dup(s, U)$
from~\cite{DBLP:conf/iros/DuKSL19} in Algorithm~\ref{alg:wastar}.
% Our approach
% uses the same search algorithm but with a different duplicity function.
Note
that states with higher duplicity have a higher inflation factor leading to
giving them lower priority in $\open$ maintained by weighted A*, and vice versa.
% If the underlying heuristic is admissible, and since $\dup(s, s') \in [0, 1]$,
% then Algorithm~\ref{alg:wastar} is guaranteed to produce results with a bounded
% sub-optimality factor of $\epsilonmax$.

\begin{algorithm}[t]
  \small
\caption{Weighted A* with Soft Duplicate Detection~\cite{DBLP:conf/iros/DuKSL19}}
 \label{alg:wastar}
\begin{algorithmic}[1]
  \State {\bfseries Input:} Duplicity Function $\dup$, constants $\epsilon_{0}, \epsilonmax$
  \State Set $g(\start) = 0$ \Comment{$g$-value for other states set to $\infty$}
  \State Set $\closed = \phi$, and $\open = \{\start\}$
  \While {$\goal \notin \closed$ and $\open \neq \phi$}
  \State Set $U = \open \cup \closed$
  \State Pop $s$ with the smallest $f$-value from $\open$
  \State $\closed = \closed \cup \{s\}$
  \For {$s' \in \successors_{1}(s)$}
  \If {$s' \notin U$}
  \State $\epsilon(s') = \max(\epsilonmax \cdot \dup(s', U), \epsilon_{0})$
  \EndIf
  \If {$g(s') > g(s) + c(s, s')$}
  \State $g(s') = g(s) + c(s, s')$
  \State $f(s') = g(s') + \epsilon(s')\cdot h(s')$
  \State $\open = \open \cup \{s'\}$ \Comment{Key is $f$-value}
  \EndIf
  \EndFor
  \EndWhile
  \If {$\goal \in \closed$}
  \State \Return {Path found from $\start$ to $\goal$}
  \EndIf
  \State \Return {No path found}
\end{algorithmic}
\end{algorithm}

Wei et. al.~\cite{DBLP:conf/iros/DuKSL19} propose using a duplicity
function $\dupe$ that is defined as follows,
\begin{equation}
  \label{eq:3}
  \dupe(s, U) = 1 - \frac{\dNN(s, U)}{R \cdot \gamma(\parent)}
\end{equation}
where $\dNN(s, U) = \min_{s' \in U} \dist(s, s')$, $\dist$ is a euclidean
distance metric defined on 
$\statespace$, $R$ is a distance normalization constant, and
$\gamma(\parent)$ is the valid successor ratio that computes the
proportion of successor states of the parent of $s$ that are not in
collision with obstacles. Specifically, for any state $s$ we have
\begin{equation}
  \label{eq:1}
  \gamma(s) = \frac{\text{\# collision-free states in $\successors_{1}(s)$ }}{\text{\#
      states in $\successors_{1}(s)$}}
\end{equation}
Intuitively, the duplicity function in equation~\eqref{eq:3} assigns higher
duplicity to states that are close to each other, and vice versa. It
also assigns lower duplicity for states that are in the vicinity of
obstacles to ensure that the search does not penalize states within
tight workspaces such as narrow passageways.

\section{Approach}
In this section, we will present our novel metric for duplicity
computation in the soft duplicate detection framework. We will start
with introducing subtree overlap, a metric that accounts for
kinodynamic constraints explicitly. Subsequently, we
present simple examples
to motivate why we can expect subtree overlap to be a better
indication of duplicity when compared to using euclidean distance
alone. Finally, we present our
approach where we precompute subtree overlap values, given robot dynamics, for
all possible relative state configurations, and use the stored precomputed values
during search to speed up planning.

\begin{figure*}[t]
  \centering
  \begin{subfigure}{0.2\linewidth}
  \includegraphics[width=\linewidth]{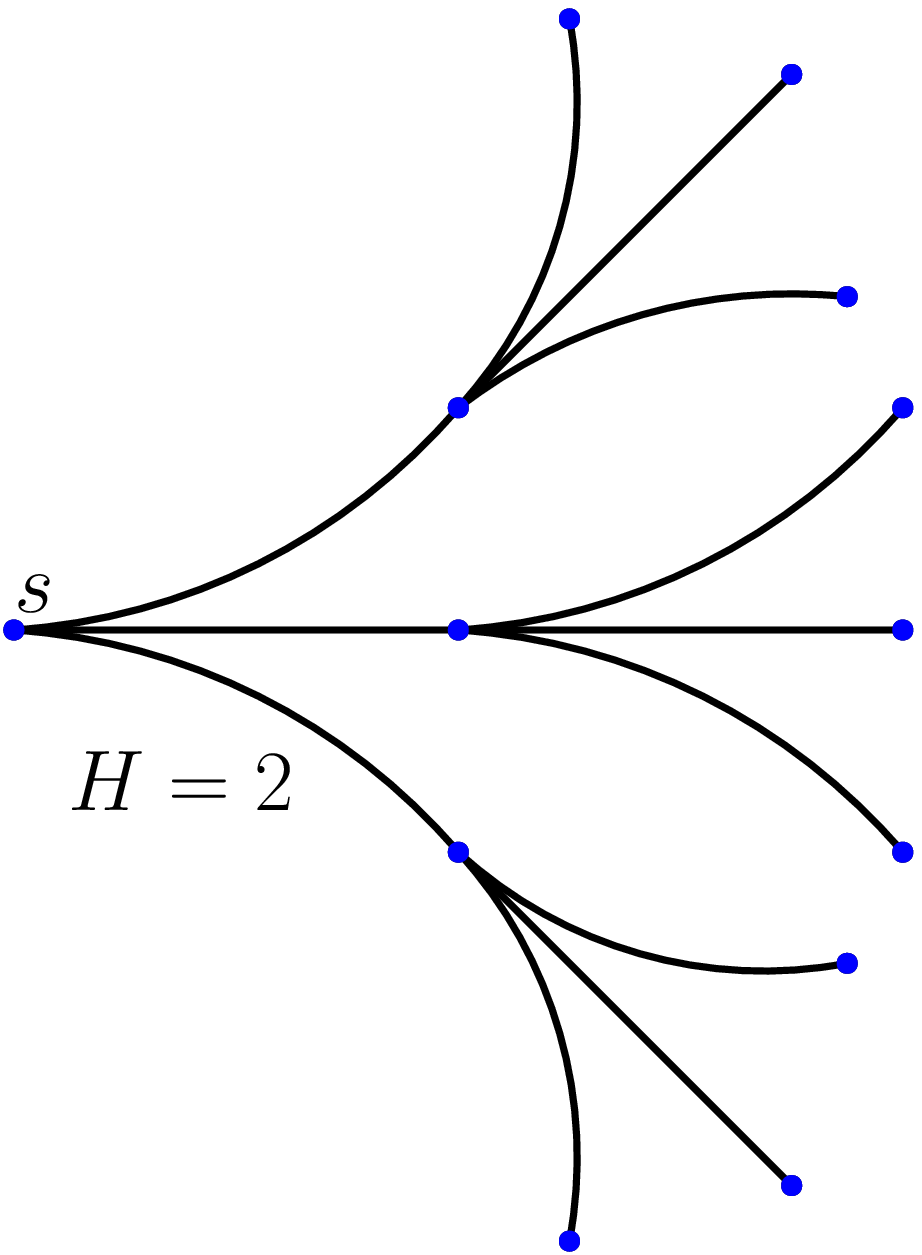}  
  \end{subfigure}\hfill
  \begin{subfigure}{0.4\linewidth}
    \includegraphics[width=\linewidth]{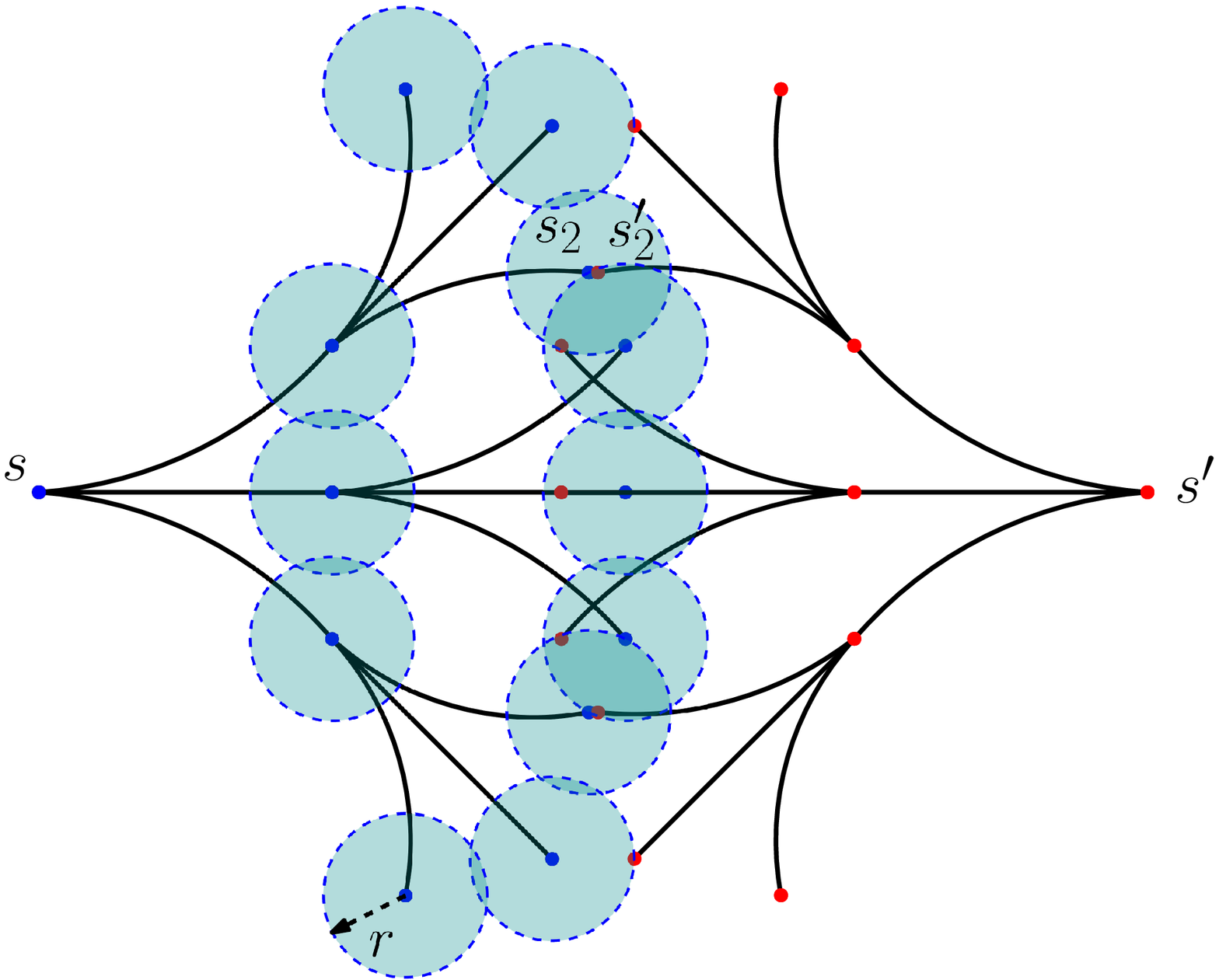}
  \end{subfigure}\hfill
  \begin{subfigure}{0.2\linewidth}
    \includegraphics[width=\linewidth]{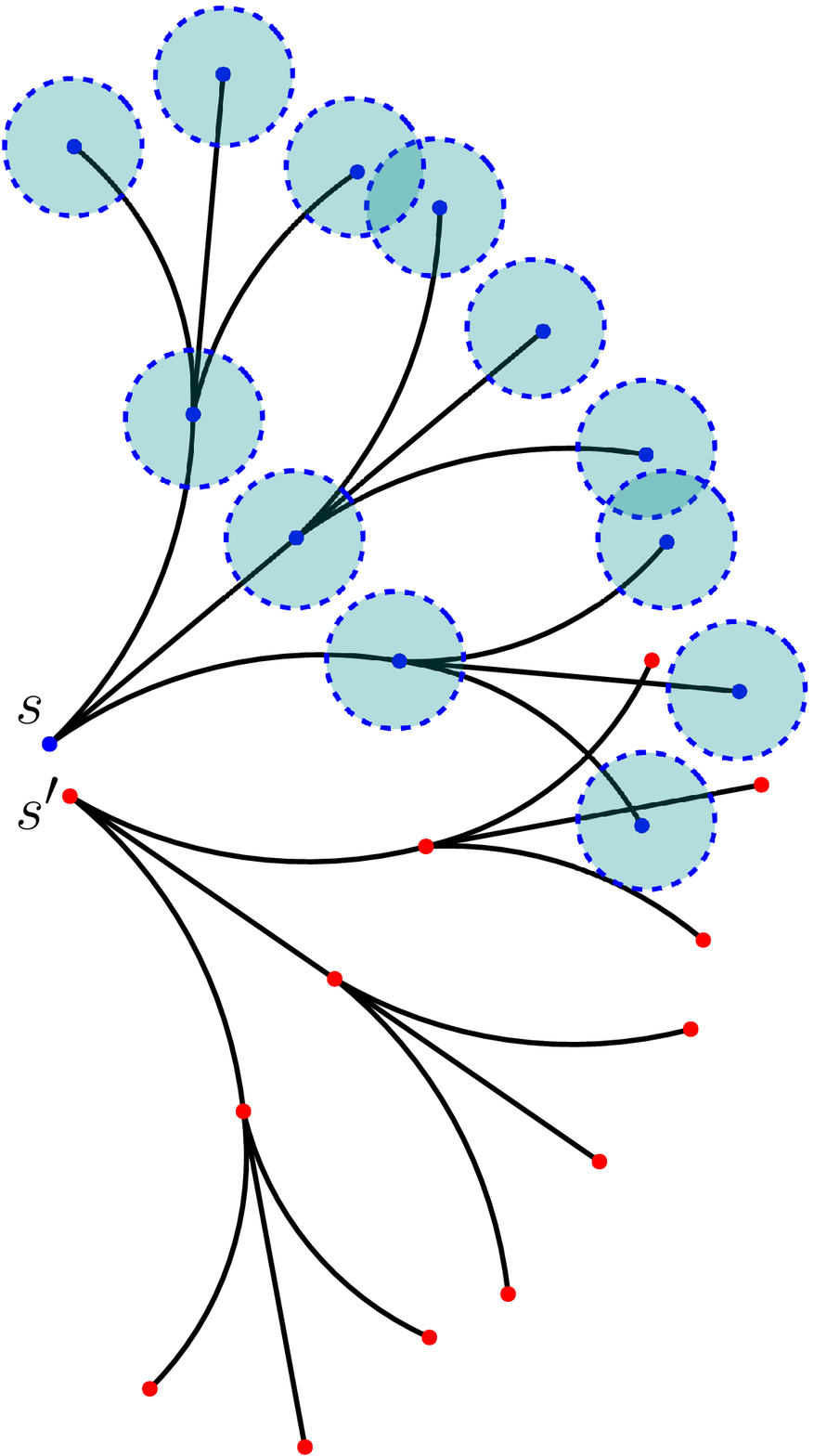}
  \end{subfigure}\hfill
  \caption{(left) Example subtree $\successors_2(s)$ constructed at a state $s$ with a
    depth $H=2$. (middle) Two states $s, s'$ that are far from each
    other but have significant overlap in subtrees resulting in
    $\eta_2(s, s') = \frac{7}{12}$. The circles have a radius $r$ and
    are centered on each state in $\successors_2(s)$. (right) Two states
    $s, s'$ that are close to each 
    other but have no overlap in subtrees resulting in $\eta_2(s, s')
    = 0$}
  \label{fig:subtree}
  \vspace{-0.2cm}
\end{figure*}

\subsection{Subtree Overlap}
\label{sec:subtree-overlap}

Given a set of states $U$ that have already been
encountered by the search, a new state $s$ is useful if it
helps the search explore a 
new region of the state space that would not have been explored by expanding
states in $U$. This observation allows us to come up with a
kinodynamically informed metric for
duplicate detection.
Let $\successors_{H}(s)$ denote the successors of state $s$ that
are dynamically feasible to reach through the execution of $H$ (or less) motion
primitives in sequence.
This set can be represented using a subtree, as shown in
Figure~\ref{fig:subtree}(left), rooted at $s$ and with a depth $H$. Observe that this
subtree captures the region of state
space that will be potentially explored within $H$ motion primitives when we
expand the state $s$ during search.

For a new state $s$ which is being added to $\open$ whose duplicity needs to be
evaluated (see Algorithm~\ref{alg:wastar},) we first construct the
subtree $\successors_{H}(s)$ for some fixed $H > 0$. To understand if expanding
$s$ allows us to explore a new region of the state space, we can consider any state
$ s' \in U = \open \cup \closed$ and construct the subtree $\successors_{H}(s')$
that contains the region of state space that can be explored by expanding $s'$. Comparing
$\successors_{H}(s')$ with $\successors_{H}(s)$ allows us to evaluate the
relative utility of $s$ in computing a path to a goal in $\goalset$ given that
the search has already seen state $s'$. To capture this quantity precisely,
we define the metric \textit{subtree overlap} of a state $s$ with another
state $s'$ denoted by $\eta_{H}(s, s')$.

The subtree overlap $\eta_{H}(s, s')$ is computed as the proportion of states
in subtree $\successors_{H}(s)$ that ``overlap'' with any state in the subtree
$\successors_{H}(s')$. We consider two states to be
overlapping if the euclidean distance between them is less than a small constant
$r$, and they lie along the same depth in their respective subtrees. For example,
consider state $s_{2}$ that is at depth $2$ in the subtree
$\successors_{H}(s)$ as shown in Figure~\ref{fig:subtree}(middle). If the subtree
$\successors_{H}(s')$ contains a state
$s'_{2}$ that is also at depth $2$ and is less than $r$ distance away from
$s_{2}$, then we consider $s_{2}$ to be overlapping with $s'_{2}$. Thus we have,
\begin{equation}
  \label{eq:4}
  \eta_{H}(s, s') = \frac{\text{\# states in $\successors_{H}(s)$ that
    overlap with $\successors_{H}(s')$}}{\text{\# states in $\successors_{{H}}(s)$}}
\end{equation}

Intuitively, when we have a high $\eta_{H}(s, s')$ (close to $1$) then the state
$s$ is likely to be a duplicate of $s'$ as they lead to very similar
successors. One such example is shown in Figure~\ref{fig:subtree}
(middle) where we have two states $s, s'$ that are far in euclidean
distance but result in similar successors and thus, have a high
$\eta_H(s, s')$. Using euclidean distance alone would lead us to
incorrectly conclude that $s, s'$ are not duplicates.

On the other hand, if we have a low $\eta_{H}(s, s')$ (close to $0$) then both
states lead to widely different successors, hence are not likely to be
duplicates of each other. An example of this is shown in
Figure~\ref{fig:subtree} (right) where we have two states $s, s'$ that
are close to each other, but have minimal overlap in successors and
thus, have a low $\eta_H(s, s')$. These states would be incorrectly
considered duplicates if we solely used euclidean distance.

% \subsection{Motivating Examples}
% \label{sec:examples}

% Subtree $\successors_{H}(s)$ captures the region of state space that is
% \textit{dynamically} feasible to reach from state $s$ using motion primitives.
% This allows us to reason about robot dynamics explicitly when computing the
% duplicity of a state with respect to another state. In domains with highly
% constrained dynamics, we could have two states $s, s'$ that are close to each
% other in euclidean space but result in widely different successors due to the
% the robot dynamics. One such example is shown in
% Figure~\ref{fig:subtree}(right), where we have two states $s$ and $s'$ which are
% close to each other, but their respective subtrees have minimal
% overlap. Duplicity functions, such as the 
% one proposed in~\cite{DBLP:conf/iros/DuKSL19}, that only accounts for euclidean
% distance will incorrectly assign a high duplicity to states that are
% close, and might
% result in the search taking a long time to find a path to the goal.

% On the other hand, we could also have a pair of states $s$ and $s'$, as shown in
% the example in Figure~\ref{fig:subtree}(middle), which are far in euclidean space
% but result in very similar successors as captured by our subtree overlap metric.
% Once again, simply relying on euclidean distance alone can lead to
% assigning a low duplicity for 
% the pair $(s, s')$ which can result in the search wasting expansions and taking
% a longer time to compute a path to the goal.

\subsection{Duplicate Detection using Subtree Overlap}
\label{sec:dupl-detect-using}

In this section, we will describe how we use the subtree overlap metric
presented in Section~\ref{sec:subtree-overlap} to compute the duplicity. Given a
decision boundary parameter $c \in [0, 1]$ and a distance normalization constant $R>>r$ (similar to
equation~\eqref{eq:3}) we have,
\begin{equation}
  \label{eq:5}
  \dups(s, s') = 1 - \frac{d(s, s')(1 + c - \eta_{H}(s, s'))}{R \cdot \gamma(\parent)}
\end{equation}
where $d(s, s')$ is the euclidean distance between $s$ and $s'$, and $\gamma$ is
the valid successor ratio as defined in equation~\eqref{eq:1}. We
chose $R$ such that $\dups(s, s') \in [0, 1]$.

To use the proposed duplicity function $\dups$ in the framework of
Algorithm~\ref{alg:wastar} we need to define how we compute $\dups(s, U)$ for a
set of states $U$. If we naively construct the subtree for all states
$s'$ in $U = \open \cup \closed$, it can be very expensive computationally as
constructing the subtree involves expanding several states and querying all of
their successors. However, computing subtree overlap, with a fixed depth $H$,
between states that are very far from each other can be unnecessary as they
would most likely not have any overlap in successors. Thus, to reduce our
computation budget, we only compute the subtree overlap with states in $U$ that
are within a euclidean distance of $R$. We obtain the set of states $U_{R}$
that are within a distance of $R$ by maintaining all states in $U$ in a
kd-tree~\cite{DBLP:journals/cacm/Bentley75}. A similar trick was also used
in~\cite{DBLP:conf/iros/DuKSL19}. Thus we have,
\begin{equation}
  \label{eq:6}
  \dups(s, U) = \max_{s' \in U_{R}} \dups(s, s')
\end{equation}
Using the proposed duplicity function $\dups$, we retain the completeness and
sub-optimality bound guarantees of weighted A* as stated in the following theorem:
\begin{theorem}
  \textit{The planner, using Algorithm~\ref{alg:wastar} with the duplicity function $\dups$ defined
  in equations~\eqref{eq:5} and~\eqref{eq:6}, is guaranteed to find a
  solution, if one exists, and the cost of the solution is within $\epsilonmax$
  times the cost of the optimal solution.}
\end{theorem}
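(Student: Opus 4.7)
The plan is to reduce the statement to the standard completeness and sub-optimality guarantees of weighted A*, by showing that the state-dependent inflation $\epsilon(s)$ produced by $\dups$ is uniformly bounded above by $\epsilonmax$. First, I would check that $\dups(s, s') \in [0, 1]$ for every pair of states, which is essentially by construction: equation~\eqref{eq:5} together with $\gamma(\parent), \eta_H(s,s') \in [0,1]$ and the explicit choice of the normalization constant $R$ (made by the authors to force $\dups \in [0,1]$) yields this. Taking the max over the truncated neighborhood $U_R$ in equation~\eqref{eq:6} preserves the range, so $\dups(s, U) \in [0, 1]$, and consequently the inflation assigned in Algorithm~\ref{alg:wastar} satisfies $\epsilon(s) = \max(\epsilonmax\cdot \dups(s, U), \epsilon_0) \in [\epsilon_0, \epsilonmax]$ every time it is set.

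Next, I would run the classical weighted A* invariant argument, lifted to state-dependent inflation. Fix any optimal path $\pi^* = (\start = s_0, s_1, \ldots, s_k)$ with $s_k \in \goalset$ and cost $C^*$. Because the $g$-value updates in Algorithm~\ref{alg:wastar} coincide with those of ordinary weighted A* (re-opening of states from $\closed$ included), the standard invariant holds: at any iteration before a goal is closed, there is an index $i$ with $s_i \in \open$ and $g(s_i) = g^*(s_i)$. Using admissibility of $h$, $\epsilon(s_i) \leq \epsilonmax$, and $\epsilonmax \geq 1$,
\[
  f(s_i) \;=\; g^*(s_i) + \epsilon(s_i)\,h(s_i) \;\leq\; g^*(s_i) + \epsilonmax\,h^*(s_i) \;\leq\; \epsilonmax\bigl(g^*(s_i) + h^*(s_i)\bigr) \;=\; \epsilonmax C^*.
\]
Thus every state popped from $\open$ has $f$-value at most $\epsilonmax C^*$, and since $h$ vanishes on $\goalset$, the first goal closed satisfies $g(\goal) \leq \epsilonmax C^*$. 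Completeness follows from the usual observation that, in a finite reachable state lattice, the bounded $f$-values together with a non-empty $\open$ force a goal to eventually be closed whenever a solution exists.

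The main obstacle is really a bookkeeping point: $\epsilon(s)$ is state-dependent and is defined in terms of $U$ at the moment $s$ is first inserted, so one must verify that $\epsilon(s)$ is not silently recomputed when $s$ is re-entered via the $g(s') > g(s) + c(s,s')$ branch. Otherwise the inflation at a given state could drift upward as $U$ grows and break the chain of inequalities above. Algorithm~\ref{alg:wastar} handles this correctly by guarding the inflation assignment behind the $s' \notin U$ test, after which the proof collapses cleanly to the familiar bounded-inflation weighted A* argument and yields both claims.
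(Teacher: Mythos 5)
Your proposal is correct and follows essentially the same route as the paper: observe that $\dups(s,U)\in[0,1]$ so the state-dependent inflation is capped at $\epsilonmax$, and then invoke the standard bounded-inflation weighted A* completeness and sub-optimality argument. The paper states this in one line, whereas you additionally spell out the invariant along an optimal path and the fact that $\epsilon(s)$ is fixed at first insertion (guarded by the $s'\notin U$ test), both of which are sound and consistent with what the paper implicitly relies on.
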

\begin{proof}
Follows from the completeness and sub-optimality bound proof of weighted
A*~\cite{DBLP:journals/ai/Pohl70} after observing that $\dups(s, U)$ always lies
in $[0, 1]$ which implies that the maximum inflation on the heuristic is bounded
by $\epsilonmax$.
\end{proof}
% Our approach uses the above duplicity function $\dups(s, U)$ in
% Algorithm~\ref{alg:wastar} to perform soft duplicate detection in search-based
% planning, and since $\dups(s, U) \in [0, 1]$ we
% are guaranteed to compute solutions that have a bounded sub-optimality factor of
% $\epsilonmax$.

\subsection{Interpreting the Proposed Duplicity Function}
\label{sec:interpr-new-dupl}

The proposed duplicity function $\dups$ as defined in equation~\eqref{eq:5} is similar to the
duplicity function $\dupe$ used in~\cite{DBLP:conf/iros/DuKSL19} (as presented in
equation~\eqref{eq:3}) with two major differences: the constant $c$ and the use
of subtree overlap metric $\eta_{H}(s, s')$. It is important to observe that the
constant $c$ plays a key role in how the subtree overlap metric
$\eta_{H}(s, s')$ affects the euclidean distance term $d(s, s')$ in equation~\eqref{eq:5}. If
$\eta_{H}(s, s') < c$, then
we \textit{scale up} the distance term $d(s, s')$, while if
$\eta_{H}(s, s') > c$ we \textit{scale down} the distance
term $d(s, s')$.

This allows us to reinterpret the subtree overlap metric as defining a new distance
metric $d_{\mathcal{S}}(s, s') = d(s, s')(1 + c - \eta_{H}(s, s'))$ where we
assign large values to pairs $(s, s')$ that are far away in euclidean
space and have low subtree overlaps. Similarly, we assign small values to pairs
$(s, s')$ that are close to each other in euclidean space and have large subtree
overlaps. Using subtree overlap in addition to euclidean distance allows us to
explicitly reason about robot dynamics, and avoid the pitfalls of using euclidean
distance alone as evidenced by our examples in Section~\ref{sec:subtree-overlap} and
our experimental results in Section~\ref{sec:experiments}.

% An observant reader would notice that constructing subtrees
% $\successors_H(s)$ requires expanding $s$ and its successors which we
% would like to avoid in the first place, especially when $s$ is a
% duplicate.
An observant reader could
argue that using $\dups$ allows us to lookahead and penalize states
earlier in search, while using $\dupe$ (as done
in~\cite{DBLP:conf/iros/DuKSL19}) instead would expand the states and
penalize overlapping
successors later in search. This raises the question of the usefulness of
$\dups$ over $\dupe$. While the reader's intuition is correct, the effect
on search progress using $\dups$ can be dramatic as it allows us to order
$\open$ more effectively for faster solution computation. Consider the example shown
in Figure~\ref{fig:subtree} (right) where $s, s'$ are close to each
other but have minimal subtree overlap. Using $\dupe$, $s$ would be
penalized and added to $\open$ with a very low priority. This delays
expanding $s$ until a much later stage in search, and can result in
long planning times if one of the successors of $s$ is crucial in
computing a path to the goal. However, using $\dups$ allows us to place
$s$ in $\open$ with a higher priority as it is not a duplicate, and
enable search to quickly expand $s$ and explore its successors leading
to the search finding a solution quickly.

\subsection{Precomputing Subtree Overlap Offline}
\label{sec:prec-subtr-overl}

In Algorithm~\ref{alg:wastar}, for every state $s$ that is about to be added to
$\open$, we need to compute the duplicity $\dups(s, U)$ using
equation~\eqref{eq:6} which requires constructing subtrees at $s$,
involving multiple expansions,  and doing the same for every
state $s' \in U_{R}$. These additional expansions can quickly become
significant especially in large state spaces and defeat the purpose of
using the subtree overlap metric to achieve fast planning times, as we
show in our experiments in Section~\ref{sec:experiments}.
To avoid this computational burden, we make an important observation that the subtree
overlap metric $\eta_{H}(s, s')$ is purely a function of the relative state
configuration of $s'$ with respect to $s$, and the robot dynamics. Thus, we can
precompute the subtree overlap metric for all possible relative state
configurations offline using the motion primitives, and store it for quick
lookup during search.

We implement this by finely discretizing the continuous state space
$\statespace$. It is important to note that the fine discretization
is only used for the purposes of precomputing and storing subtree overlap values, and
not used for search. We consider all possible discrete relative state configurations of
$s'$ with respect to any fixed state $s$ so that they still lie within a
euclidean distance of $R$ in the continuous space. For each such relative
configuration, we compute the subtree overlap metric $\eta_{H}(s, s')$ offline and store
it in a hash table. During search, for any pair of states $(s, s')$ we compute
the discretized relative configuration of $s'$ with respect to $s$ and query the
hash table to obtain the corresponding subtree overlap $\eta_{H}(s, s')$. The
resulting value is used in equation~\eqref{eq:5} within
Algorithm~\ref{alg:wastar}. Thus, the search does not construct subtrees or
compute overlap at runtime. This
allows the planner to avoid additional expansions due to subtree
construction, and
still retain the advantages of using
subtree overlap metric.

\section{Experiments and Results}
\label{sec:experiments}

In this section, we present our experimental results on two motion planning domains: a 3D
domain with a car-like robot that has differential constraints on the
turning radius, and a 5D domain with a unmanned aerial vehicle that has
constraints on linear acceleration and angular speed. Similar experimental domains have been
chosen in~\cite{DBLP:conf/iros/DuKSL19}. We compare our approach with
\weiapproach~\cite{DBLP:conf/iros/DuKSL19},
\rrt~\cite{DBLP:journals/ijrr/LaValleK01}, and
\wastar~\cite{DBLP:journals/ai/Pohl70}. \weiapproach{} is the state-of-the-art
approach for soft duplicate detection in continuous space search-based motion
planning while \wastar{} performs search in continuous space without any
duplicate detection. \rrt{} is a kinodynamic sampling-based motion planning algorithm that
directly operates in continuous state space. We have not compared against
hard duplicate detection
approaches~\cite{DBLP:journals/algorithmica/BarraquandL93,
  DBLP:conf/socs/GonzalezL11} 
as \weiapproach{} is already shown to outperform these approaches in the domains
we consider~\cite{DBLP:conf/iros/DuKSL19}. We present two versions of our
approach: \labeler{}, which does not precompute subtree overlap values
offline, and
\hashtable{} that precomputes and stores subtree overlap values in a hash table
as described in Section~\ref{sec:prec-subtr-overl}. Both versions use
Algorithm~\ref{alg:wastar} with the proposed duplicity function $\dups$ as
described in Section~\ref{sec:dupl-detect-using}. For all experiments,
we use an admissible heuristic $h(s)$ that is computed using BFS using only
position state variables on a discretized state space. All experiments
were run on an Intel i7-7500U CPU (2.7 GHz) with 8GB RAM. The source code for all experiments is open-sourced.\footnote{The code for all 3D experiments (including sensitivity analysis) can be found at \url{https://github.com/Nader-Merai/cspace3d_subtree_overlap}. The code for 5D experiments can be found at \url{https://github.com/Nader-Merai/cspace5d_subtree_overlap}}

\begin{table*}[t]
  \centering
  \scriptsize
  \begin{tabular}{|c|c|c|c|c||c|c|c|c|}
\hline
    & \multicolumn{4}{c||}{\scriptsize 3D} &
                                            \multicolumn{4}{c|}{\scriptsize
                                            5D} \\
    \hline
    & {\scriptsize \weiapproach{}}&{\scriptsize \hashtable{}}&
                                                           {\scriptsize
                                                           \labeler{}}&
                                                                        {\scriptsize
                                                                        \rrt{}} &
                                                                        {\scriptsize
                                                                        \weiapproach{}} &
                                                                  {\scriptsize
                                                                  \hashtable{}
                                                                  }&
                                                                       {\scriptsize
                                                                       \labeler{}
                                                                       }
                                                                                        &
                                                                                          {\scriptsize \rrt{}}
    \\
\hline
{\scriptsize Time (s)} & $11.5 \pm 1$ &$\mathbf{7.8 \pm 0.3}$ & $96
                                                                \pm
                                                                28$ &
                                                                      $2.1
                                                                      \pm
                                                                      0.4$ &
                                                                      $22
                                                                      \pm
                                                                      5$
                                                             &$\mathbf{9.5 \pm 1}$ &
                                                                     $84
                                                                     \pm
                                                                     6$
                                                                                &
                                                                                  $45
                                                                                  \pm
                                                                                  14$
    \\
\hline
{\scriptsize Cost} & $5.6 \pm 0.5$&$5.4 \pm 0.4$ & $5.3 \pm
                                                             0.4$ & $6
                                                                    \pm
                                                                    0.5$ &
                                                                    $3.3
                                                                    \pm
                                                                    1.7$&
                                                                    $1.3
                                                                    \pm
                                                                    0.1$
                                & $1.3 \pm 0.1$ & $1.8 \pm 0.2$ \\
    \hline
    {\scriptsize Expansions} & $57 \pm 14$&$8.2 \pm 1.4$ &
                                                                       $\mathbf{4.9
                                                                       \pm
                                                                       1}
                                                           (896 \pm 628)$
                                  & $-$ 
    & $46 \pm 18$ & $\mathbf{10.6 \pm 1.6}$ & $\mathbf{10.4 \pm 1.3} (875 \pm 110)$ & $-$\\
    \hline
  \end{tabular}
  \caption{Results for 3D and 5D experiments. Cost is reported as multiples of $10^4$ and
    expansions as multiples of $10^3$. \rrt{} does not perform
  expansions as it is a sampling-based planner. \wastar{} is not
  included as it was not successful in 
  computing a solution in $120$ seconds, and has a success rate of
  $40\%$ in 3D and $3\%$ in 5D. The mean and standard error statistics
are reported in the table. For \labeler{}, the expansions reported are
the number of expansions done during search and the parentheses contain
the total number of expansions including subtree construction.}
\label{tab:results}
\vspace{-0.2cm}
\end{table*}

\begin{figure}[t]
  \centering
  \begin{subfigure}{0.49\columnwidth}
    \includegraphics[width=\linewidth]{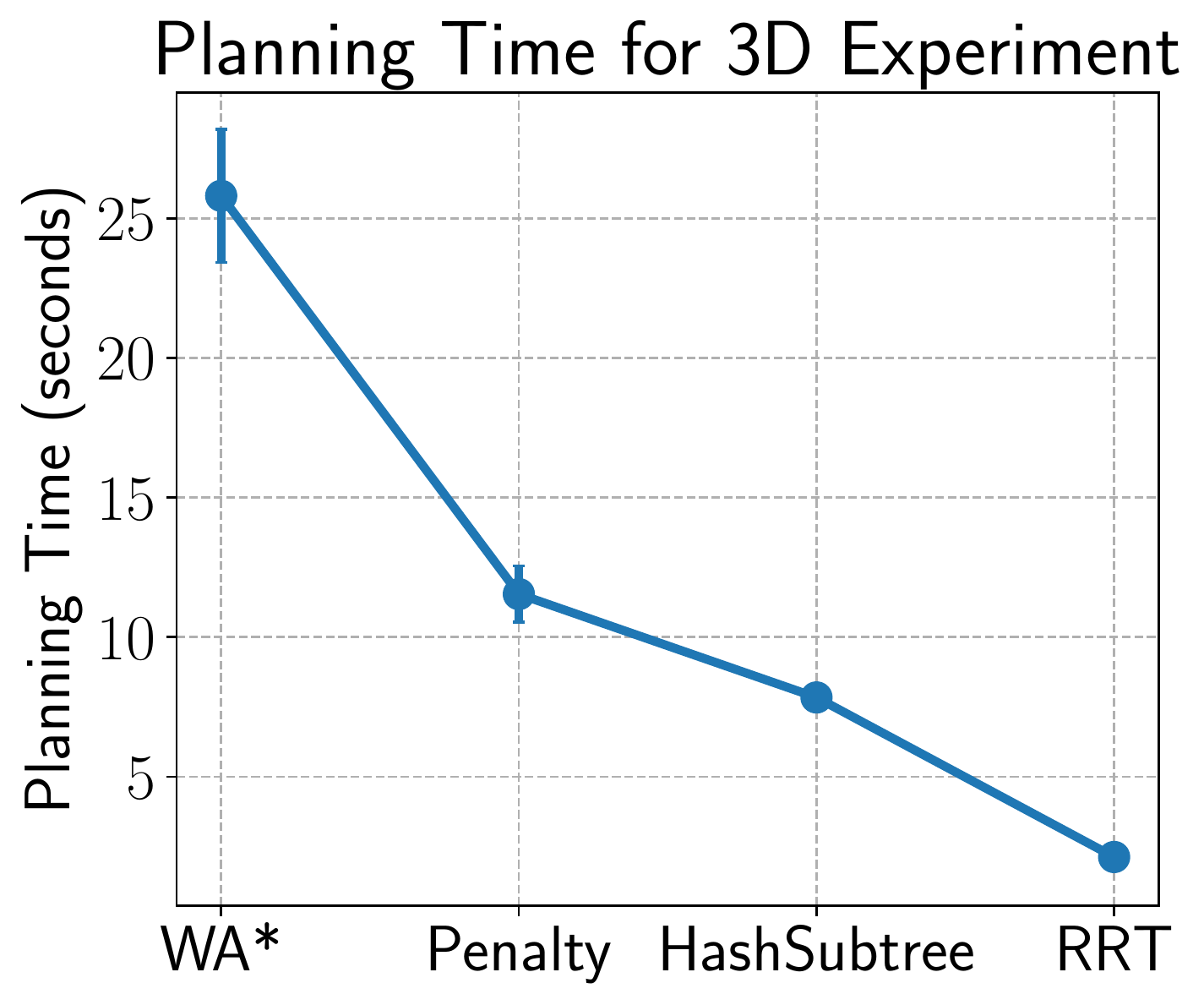}
  \end{subfigure}
  \begin{subfigure}{0.49\columnwidth}
    \includegraphics[width=\linewidth]{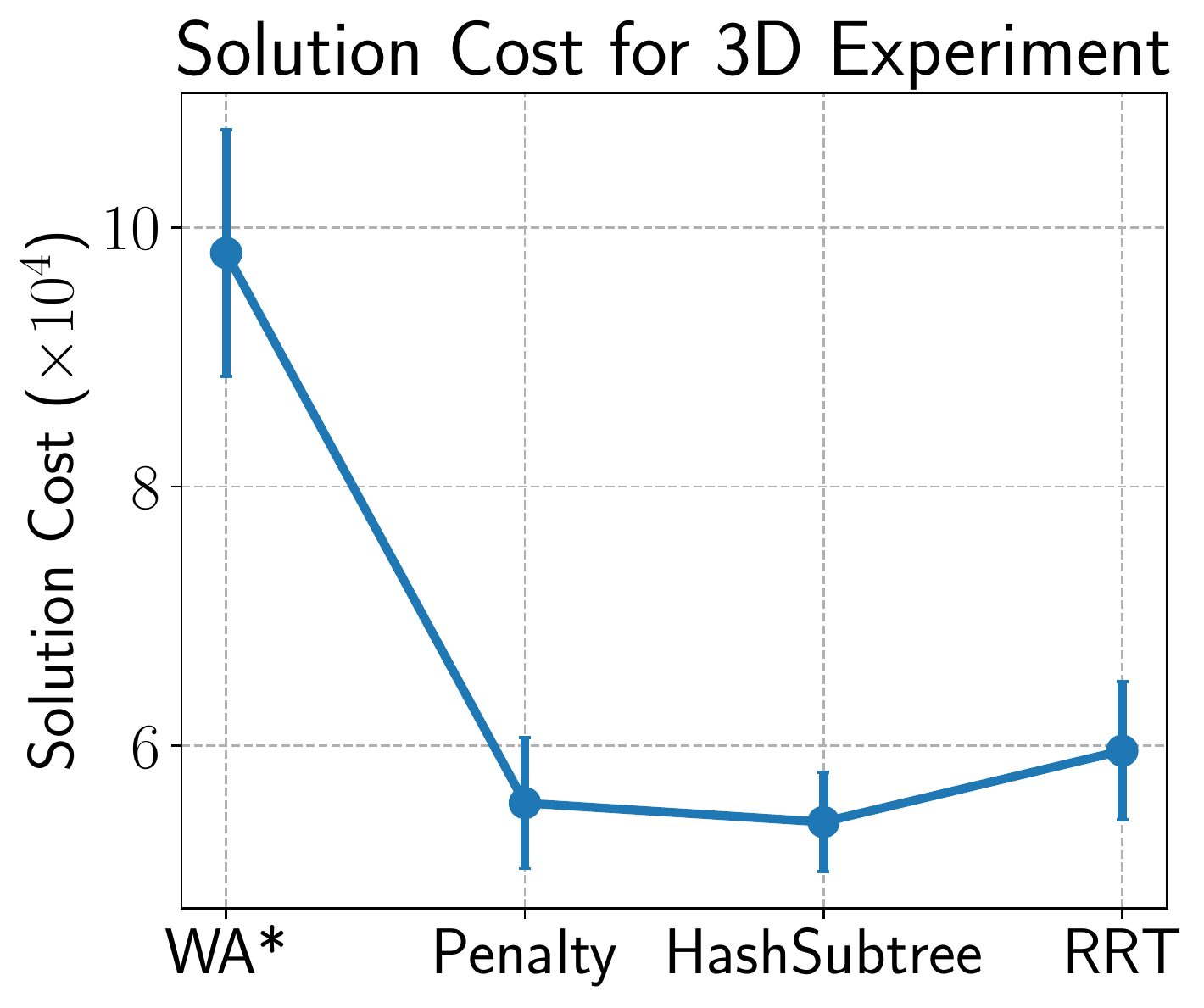}
  \end{subfigure}
  \caption{(left) Planning time and (right) Solution cost
    for 3D experiment. Error bars show standard error over $30$ runs.}
  \label{fig:3d-results}
  \vspace{-0.3cm}
\end{figure}
\begin{figure*}[t]
  \centering
  \begin{subfigure}{.15\linewidth}
    \includegraphics[width=\linewidth]{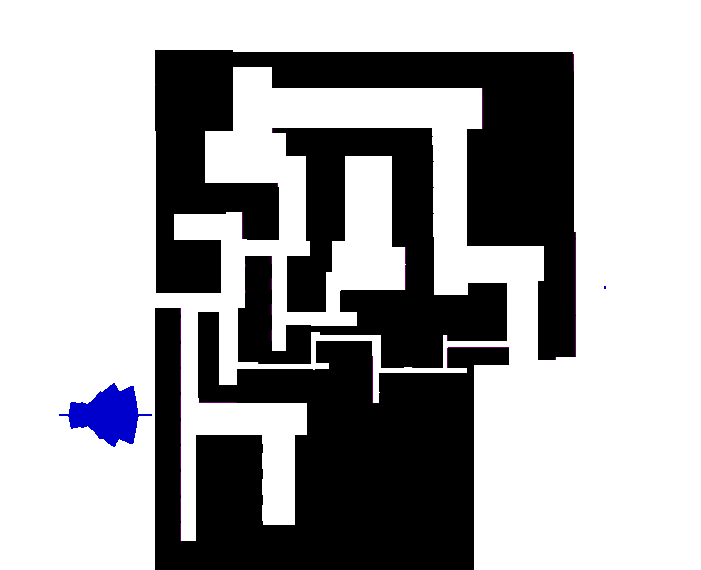}
    \caption{\wastar{}}
  \end{subfigure}\hfill
  \begin{subfigure}{.15\linewidth}
    \includegraphics[width=\linewidth]{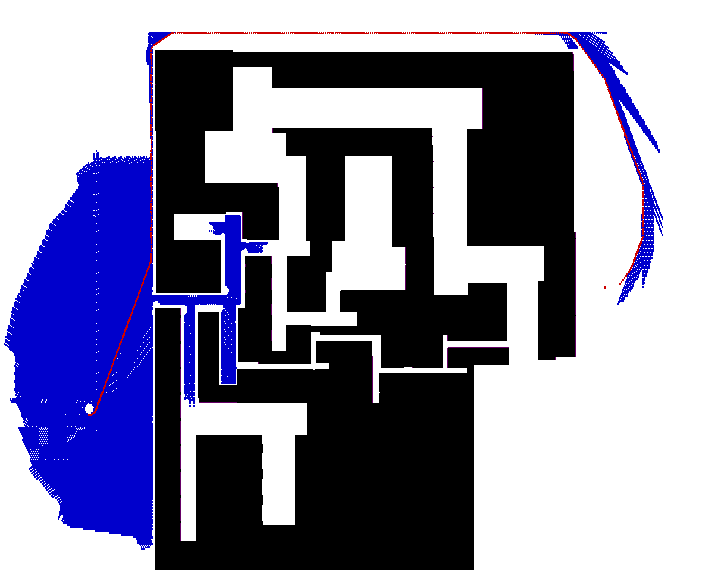}
    \caption{\weiapproach{}}
  \end{subfigure}\hfill
  \begin{subfigure}{.15\linewidth}
    \includegraphics[width=\linewidth]{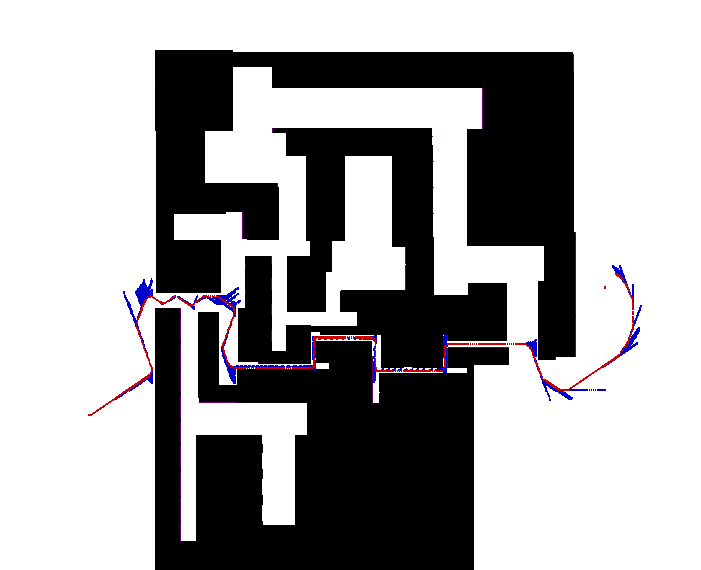}
    \caption{\labeler{}}
  \end{subfigure}\hfill
  \begin{subfigure}{.15\linewidth}
    \includegraphics[width=\linewidth]{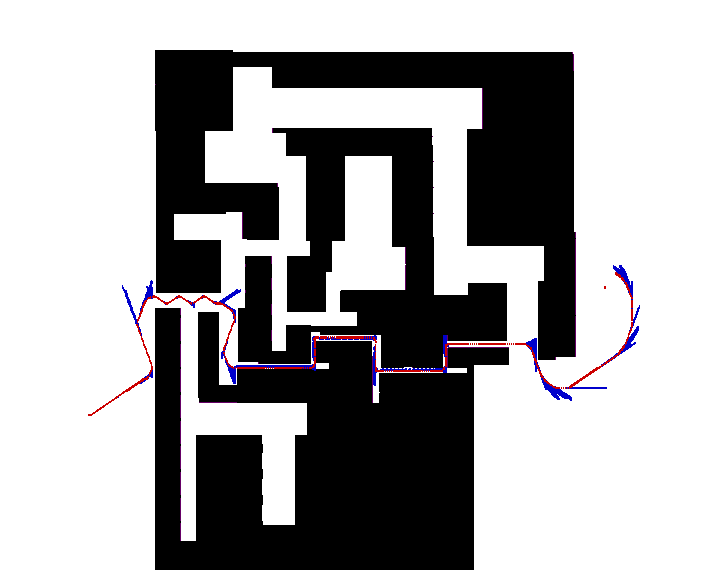}
    \caption{\hashtable{}}
  \end{subfigure}\hfill
  \begin{subfigure}{.15\linewidth}
    \includegraphics[width=\linewidth]{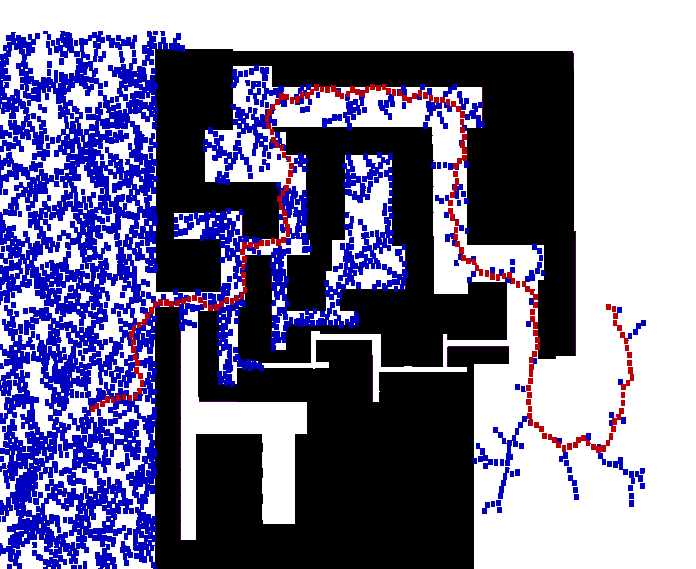}
    \caption{\rrt{}}
  \end{subfigure}
  \caption{Heatmaps showing results for 3D experiment. The red path
    shows the solution, and blue regions represent
    states that have been expanded. In the case of \rrt{}, blue states
    represent sampled states. The start is to the left of
    maze and the goal is to the right of maze. All approaches are
    successful, except for \wastar{} which
    could not find a solution in $120$ seconds.}
  \label{fig:heatmaps}
  \vspace{-0.2cm}
\end{figure*}
\begin{figure*}[t]
  \centering
  \begin{subfigure}{0.25\linewidth}
    \includegraphics[width=\linewidth]{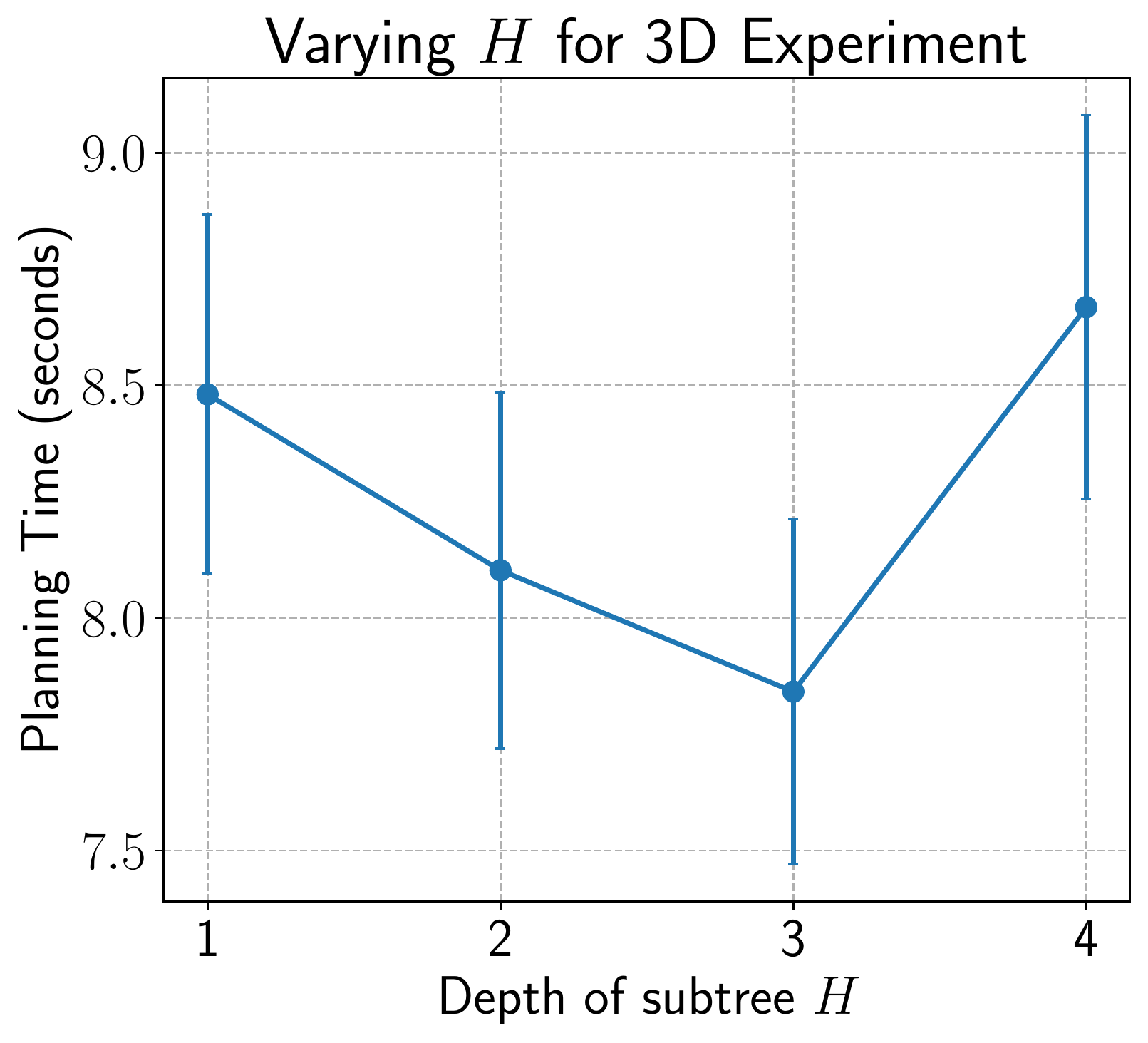}
  \end{subfigure}\hfill
  \begin{subfigure}{0.25\linewidth}
    \includegraphics[width=\linewidth]{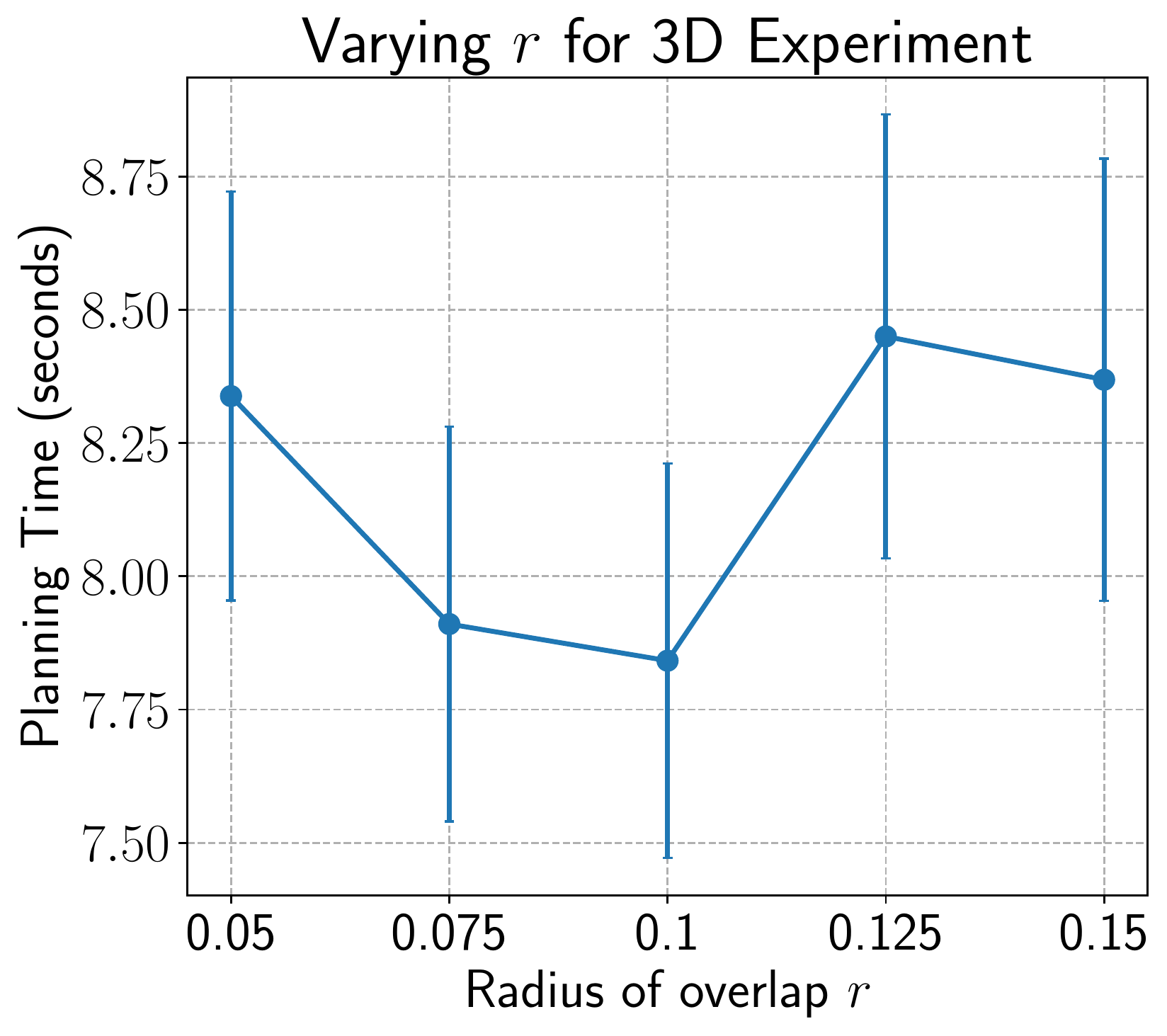}
  \end{subfigure}\hfill
  \begin{subfigure}{0.25\linewidth}
    \includegraphics[width=\linewidth]{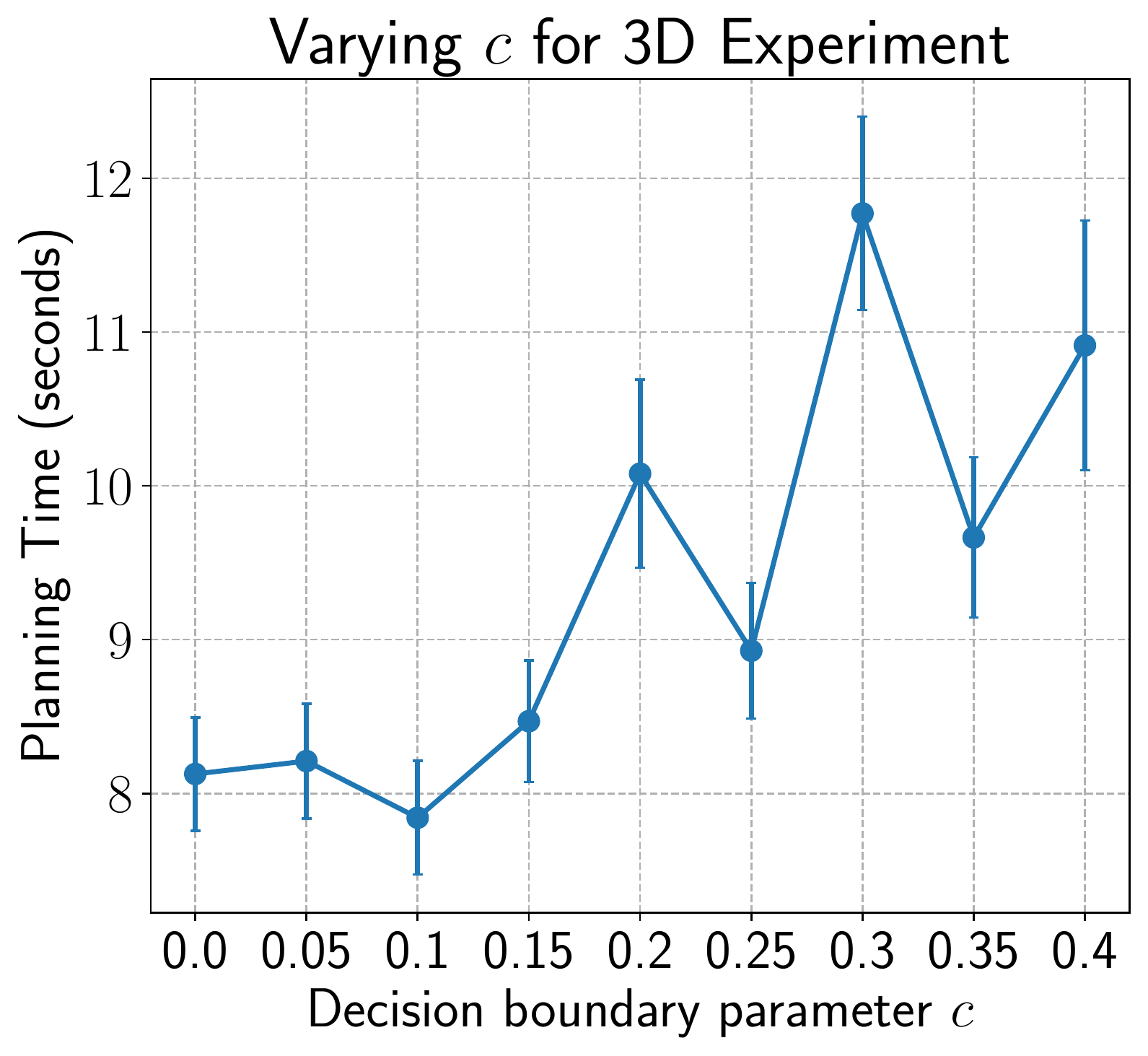}
  \end{subfigure}
  \caption{Planning time as the (left) subtree depth $H$, (middle)
    radius of overlap $r$, and (right) decision boundary parameter $c$
    are varied in 3D experiments while keeping the other parameters
    fixed. Error bars show standard error over $30$ runs.}
  \label{fig:sensitivity}
  \vspace{-0.2cm}
\end{figure*}

\subsection{3D Experiment}
\label{sec:3d-experiment}
Our first set of experiments involve a 3D planning domain with a
car-like robot. We use $3$
maps from Moving AI lab~\cite{DBLP:journals/tciaig/Sturtevant12} each with
$10$ randomly chosen start and goal pairs. Motion primitives are
generated for the robot using a unicycle model with constraints on the
turning radius resulting in $5$ successors for each expansion. The
state space is specified using $s = (x, y, \theta)$
where $(x, y)$ describe the position of the robot, and $\theta$
describes the heading. We
use values of $c$, $H$, and $r$ as the best performing values
according to experiments in Section~\ref{sec:sensitivity-analysis}. We
use $\epsilon_0 = 1$, and $\epsilonmax = 2$.

% The results are presented in Table~\ref{tab:results},
% Figure~\ref{fig:3d-results} and Figure~\ref{fig:heatmaps}. 
% \\\\Analyzing the results, we observe that the \textsc{HashSubtree}
% algorithm produces similar solution cost and expansions metrics to
% those of the \textsc{Subtree} algorithm, while having vastly lower
% planning time, as well as better solution cost and planning time
% metrics than the rest of the algorithms with only \textsc{RRT}
% edging it out on time, alongside a more robust-to-obstacles and
% narrow pathways expansion behaviour

The results are presented in Figure~\ref{fig:3d-results} and
Table~\ref{tab:results}. Figure~\ref{fig:3d-results} shows that our
approach \hashtable{} outperforms both \wastar{} and the
state-of-the-art soft duplicate detection 
\weiapproach{} in planning time by more than a factor of $1.5$, without
sacrificing solution cost (in fact, computes better solutions.) \rrt{} outperforms 
\hashtable{} in planning time but computes solutions with higher costs
which is typical of sampling-based planners that provide no guarantees
on sub-optimality of solution. The reason for the success of our
approach is illustrated in the example shown in
Figure~\ref{fig:heatmaps}. Our approaches \labeler{} and \hashtable{}
compute the solution with significantly less number of expansions when
compared to other approaches. Since expansions are the most expensive
operation in search, reducing the number of expansions leads to large
savings in planning time.

Table~\ref{tab:results} emphasizes the impact of using subtree overlap
metric in duplicity computation. We observe that \labeler{} that
constructs subtrees and computes subtree overlap during search
achieves least expansions during search, not counting the additional
expansions for subtree construction. However, as described in
Section~\ref{sec:prec-subtr-overl}, \labeler{} takes a long time to compute a
solution because of these additional expansions, and \hashtable{} avoids
this by precomputing subtree overlap 
values offline and using the stored values during search. It is important to note that
\hashtable{} incurs a small increase in number of expansions due to
discretization, but 
greatly reduces the planning time when compared to \labeler{}. Thus,
\hashtable{} retains the advantages of using subtree overlap metric
while achieving fast planning times.

\begin{figure}[t]
  \centering
  \begin{subfigure}{0.49\columnwidth}
    \includegraphics[width=\linewidth]{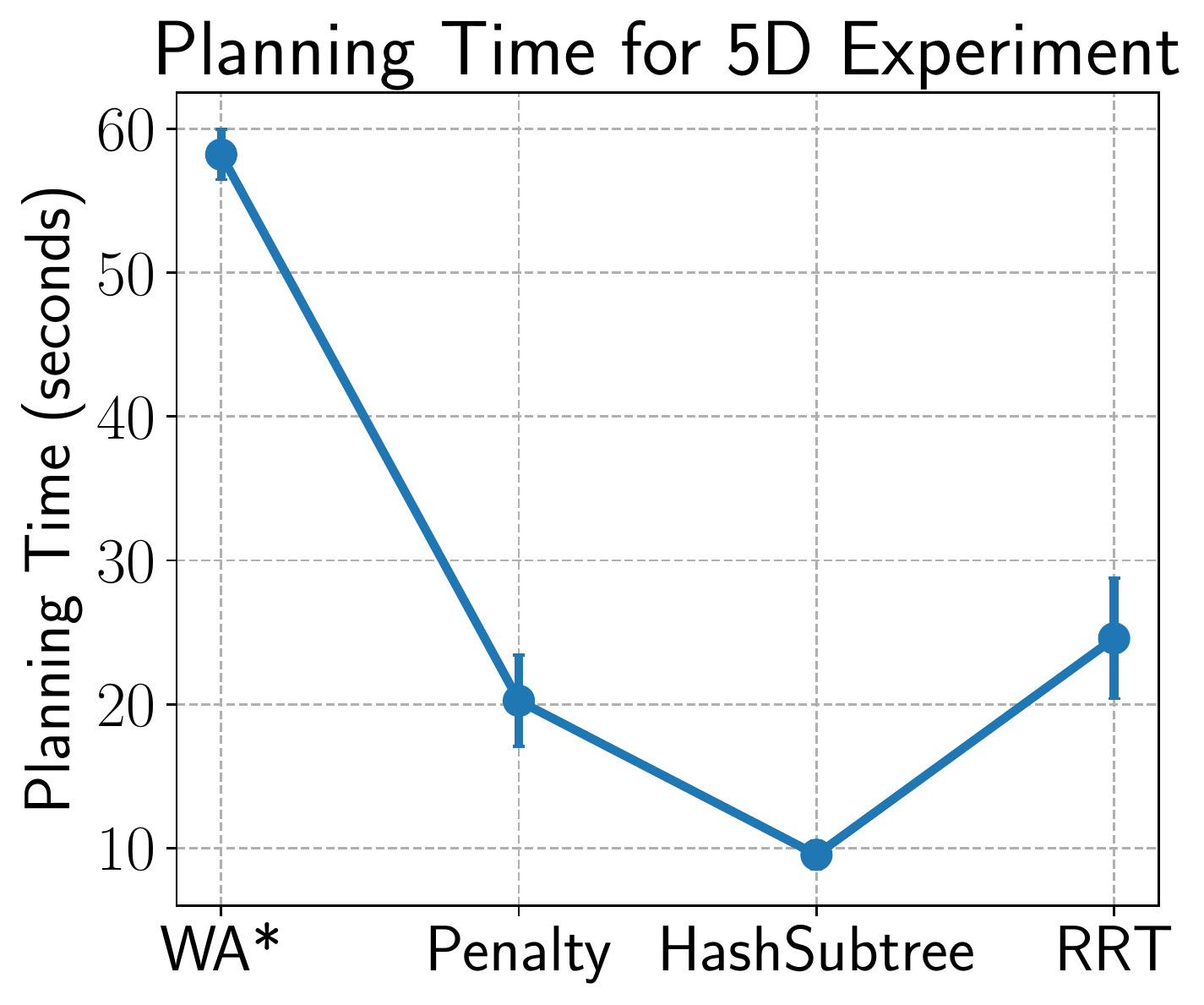}
  \end{subfigure}
  \begin{subfigure}{0.49\columnwidth}
    \includegraphics[width=\linewidth]{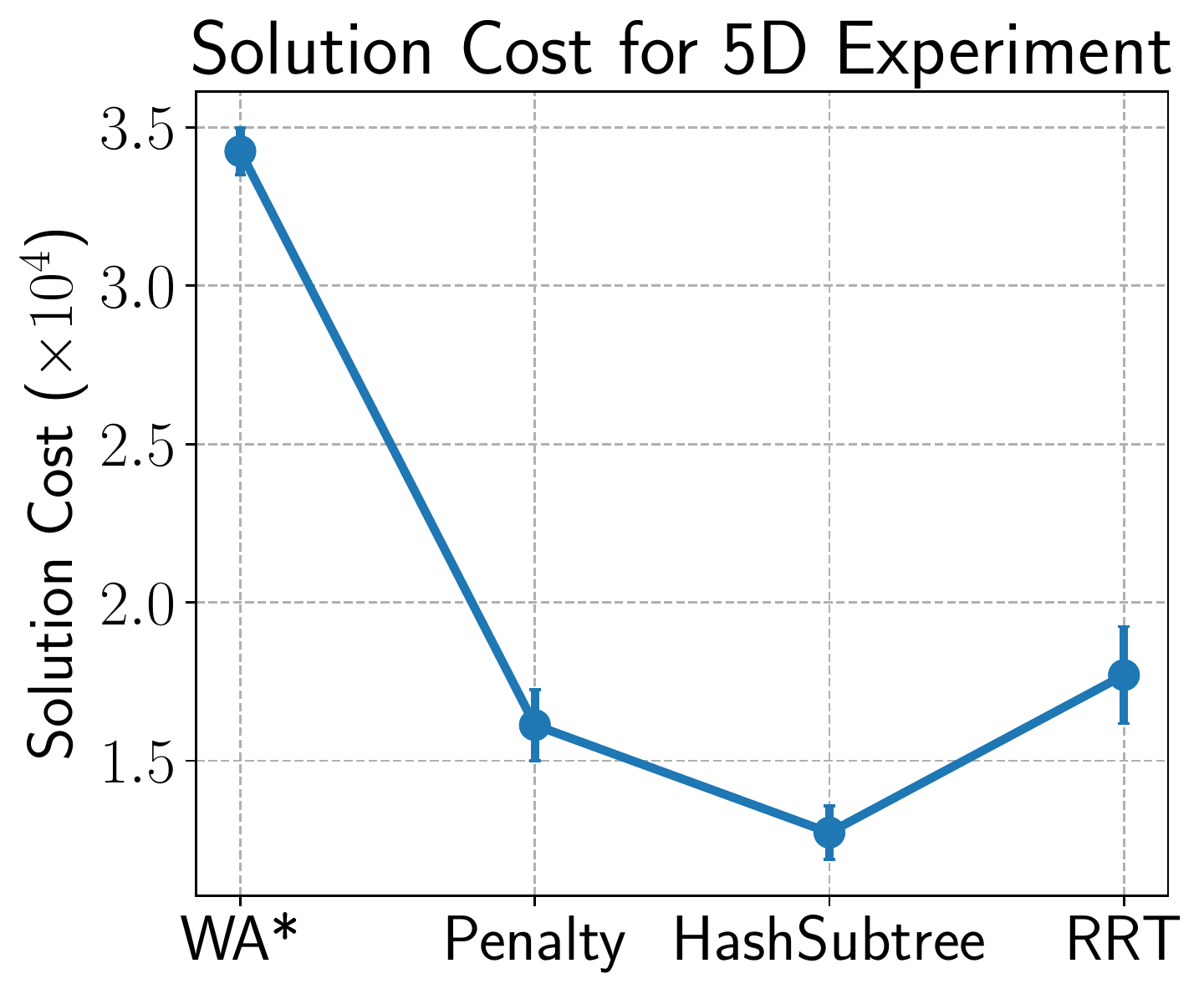}
  \end{subfigure}
  \caption{(left) Planning time and (right) Solution cost
    for 5D experiment. Error bars show standard error over $30$ runs.}
  \label{fig:5d-results}
  \vspace{-0.3cm}
\end{figure}

\subsection{5D Experiment}
\label{sec:5d-experiment}
Our second set of experiments involve a 5D planning domain with an
unmanned aerial vehicle that has constraints on linear acceleration
and angular speed. We use $5$
maps that are mesh models of real world with obstacles being no-fly
zones each with
$6$ randomly chosen start and goal pairs. Computing motion primitives
using two point boundary value problem solvers is difficult for this
domain. Thus, motion primitives are
generated for the robot using a local controller with inputs
planar acceleration in $XY$ plane $a_{xy}$, velocity in $Z$ axis $v_z$, and
the yaw angular speed $\omega$ resulting in $27$ successors for each
expansion. The state space is specified using $s = 
(x, y, z, \theta, v_{xy})$ 
where $(x, y, z)$ describe the position of the robot, $\theta$
describes the heading, and $v_{xy}$ the velocity in $XY$ plane. We use
$\epsilon_0 = 1$, $\epsilonmax = 5$, $c = 0.5$, $H = 1$, and $r = 0.1$.

The results are presented in Table~\ref{tab:results} and
Figure~\ref{fig:5d-results}. 
% \\\\Analyzing the results, we observe that the \textsc{HashSubtree} algorithm produces similar solution cost and expansions metrics to those of the \textsc{Subtree} algorithm, while having vastly lower planning time, as well as better solution cost and planning time metrics than the rest of the algorithms.
Figure~\ref{fig:5d-results} shows that \hashtable{} outperforms
other approaches in terms of planning time by more than a factor of $2$. The
heuristic computed using only $(x, y, z)$ locations is not as
informative in $5$D as it was in $3$D, which results in duplicate
detection playing a more important role. Consequently, \wastar{} is
only able to solve $1$ of the $30$ runs within $120$ seconds, while
\weiapproach{} solves all runs but takes twice as long planning times when
compared to \hashtable{}. \rrt{} also has large planning times as all
the maps have narrow passageways that the robot must pass through with
highly constrained dynamics, to get to the goal. Thus, \rrt{} uses a
lot of samples before it computes a solution leading to large planning
times. As expected, the cost of \rrt{} solution is also higher than
search-based approaches. Table~\ref{tab:results} shows that \labeler{}
achieves the least expansions during search but has large planning
times due to subtree construction, and
\hashtable{} achieves the best of both worlds with low expansions and
least planning time.

\subsection{Sensitivity Analysis}
\label{sec:sensitivity-analysis}
The final set of experiments involve the 3D planning domain described
before with the same maps, and start and goal pairs. These experiments aim to analyze how sensitive the performance of our
approach \hashtable{} is to the hyperparameters in the proposed duplicity
function $\dups$, i.e. depth of subtree $H$, radius of overlap $r$,
and decision boundary parameter $c$.

% The results are presented in Figure~\ref{fig:sensitivity}.
% \\\\Analyzing the results, we observe that the proposed approach is robust to a wide choice of hyper-parameters across $r$ and $H$, but not for $c$, which produced results of high variance, making it be the most crucial out of the three hyper-parameters, and requiring domain knowledge to tune and figure out properly.
The results are summarized in Figure~\ref{fig:sensitivity}. We observe
that the performance of \hashtable{} does not vary much with both
depth of the subtree $H$, and radius of overlap $r$. Thus, our
approach is robust to the choice of these hyperparameters. However,
the hyperparameter $c$ plays an important role in the performance of
\hashtable{} as shown in Figure~\ref{fig:sensitivity} (right) and
needs to be chosen carefully using prior domain knowledge to obtain
the best performance. More investigation is needed to understand the
best way to automatically chose this hyperparameter, and is left for
future work.

% \begin{figure}[t]
%   \centering
%   \begin{subfigure}{0.49\columnwidth}
%     \includegraphics[width=\linewidth]{Figures/depth_time_errbar.pdf}
%   \end{subfigure}
%   \begin{subfigure}{0.49\columnwidth}
%     \includegraphics[width=\linewidth]{Figures/radius_time_errbar.pdf}
%   \end{subfigure}
%   \caption{Planning time as the (left) subtree depth $H$ and (right)
%     radius of overlap $r$ are varied
%     for 3D experiment}
% \end{figure}

% \begin{figure}[t]
%   \centering
%   \includegraphics[width=\linewidth]{Figures/c_time_errbar.pdf}
%   \caption{Planning time as the decision boundary parameter $c$ is varied in 3D experiment}
% \end{figure}

\section{Conclusions and Future Work}
% We presented an approach to escaping local minimum regions for
% search-based planning in continuous spaces using offline
% computation. While the \textsc{HashSubtree} algorithm gave
% improvements in planning time, solution cost, and number of expansions
% over other baseline algorithms on 3D and 5D domains, it still suffers
% from the curse of dimensionality, since the hash table would grow
% exponentially with each additional dimension. One way of overcoming
% this problem would be to use a classifier instead of a hash table,
% which would be trained on a well-built data set of hash table keys as
% the data and hash table values as the labels, in order to imitate the
% behaviour of a hash table without requiring the huge amount of space
% it would require at higher dimensions.
We presented a soft duplicate detection approach for search-based
planning methods to operate in continuous spaces. Unlike previous
work~\cite{DBLP:conf/iros/DuKSL19} that uses a simple metric based on
euclidean distance to compute the duplicity of a state, we introduce a
more kinodynamically informed metric, namely subtree overlap, that encodes the
likelihood of a state contributing to the solution given that the
search has already seen other states.
% We motivate why our proposed
% metric is better at capturing duplicity of a state when compared to
% using euclidean distance alone using simple examples.
Theoretically,
we show that our approach has completeness and sub-optimality bound
guarantees using 
weighted A* and the proposed duplicity function. Empirically, our
approach outperforms previous approaches in terms of planning time by
a factor of $1.5$ to $2\times$ on $3$D and $5$D planning domains with
highly constrained dynamics. One direction for future work would be to
explore automatically selecting 
the decision boundary parameter $c$ given a new planning
domain. Furthermore, while \hashtable{} outperformed 
other approaches in terms of planning time in $3$D and $5$D domains,
it is infeasible to store subtree overlap values in a hashtable for
higher dimensional planning domains as the size of the hashtable grows
exponentially. A potential solution is to train a function
approximator offline to learn to predict subtree overlap $\eta_H(s, s')$ given
feature representations of the states $s, s'$. The trained function
approximator can be used during search to quickly compute the
duplicity of a state in high dimensional planning problems.

\section*{Acknowledgements}

AV would like to thank Muhammad Suhail Saleem and Rishi Veerapaneni for their help in reviewing the paper. The authors would like to thank Wei Du for sharing the code to setup experiments, and Nikita Rupani for her initial work that led to some of the insights in this work.

\bibliographystyle{IEEEtran}
\bibliography{ref}

\end{document}